%% file: main.tex
\pdfoutput=1
\documentclass[letterpaper, 10 pt, conference]{ieeeconf}
\IEEEoverridecommandlockouts
\usepackage{cite}
\usepackage{amsmath,amssymb,amsfonts}

\usepackage{amsthm}
\usepackage{graphicx}
\usepackage{textcomp}
\usepackage{url}
\usepackage{xcolor}
\usepackage{booktabs}
\usepackage{multirow}
\usepackage{array}
\input{numbers}
\usepackage{hyperref}
\hypersetup{colorlinks=true, linkcolor=black, citecolor=black, urlcolor=blue, pdftitle={Structure over Reasoning: Unified Zero-Shot Navigation with a Small Vision-Language Model}, pdfauthor={Zeqi Li, Xunyi Zhao, Zerui Li, Sihao Lin, Qi Wu, ShiJie Li}}
\makeatletter
\renewenvironment{proof}[1][\proofname]{\par\pushQED{\qed}\normalfont\topsep2\p@\@plus2\p@\relax\trivlist\item[\hskip\labelsep\itshape#1\@addpunct{.}]\ignorespaces}{\popQED\endtrivlist\@endpefalse}
\makeatother
\newtheoremstyle{tight}{2pt plus 1pt}{2pt plus 1pt}{\itshape}{}{\bfseries}{.}{ }{}
\theoremstyle{tight}
\newtheorem{proposition}{Proposition}
\DeclareMathOperator*{\argmax}{arg\,max}

\newlength{\topfloatshim}
\begin{document}
\bstctlcite{BSTcontrol}

\title{Structure over Reasoning: Unified Zero-Shot\\Navigation with a Small Vision-Language Model}

\author{Zeqi Li$^{1}$, Xunyi Zhao$^{2}$, Zerui Li$^{2}$, Sihao Lin$^{2}$, Qi Wu$^{2}$ and ShiJie Li$^{3}$%
\thanks{This work has been submitted to the IEEE for possible publication. Copyright may be transferred without notice, after which this version may no longer be accessible.}%
\thanks{$^{1}$Technical University of Munich, Germany.}%
\thanks{$^{2}$Adelaide University, Responsible AI Research Centre, Australian Institute for Machine Learning, Australia.}%
\thanks{$^{3}$Institute for Infocomm Research (I2R), A*STAR, Singapore.}%
}

\maketitle

\begin{abstract}
Vision-language navigation is moving into photorealistic, physically simulated scenes, where the leading zero-shot agents put a vision-language model in the control loop to choose every move and stop. Robots favour small models, whose abilities are uneven: Qwen3-VL-4B can judge whether a single view looks like the goal, but its route reasoning is at chance, its distance estimates nearly constant, its stop votes unreliable. Our agent therefore asks it two perceptual questions per step, which direction looks like the goal and whether the view justifies stopping, plus one gated frontier query on route-bearing instructions, and treats the answers as evidence, not actions: a route-blind semantic field over a bird's-eye map accumulates the directional scores through a tempered update, a skeleton graph supplies frontiers and geodesic distances, and a two-stage terminator turns stop votes into termini through a trail-consistency gate and a budget-forced return. With every baseline re-run in one installation at one budget, the agent exceeds all four zero-shot baselines of the VLNVerse subset in mean success on both granularities, though the paired interval against the strongest coarse-grained baseline includes zero. Termination is the dominant lever in ablation, thirteen attempts to give the model more of the navigation establish no reliable gain, and the agent runs on a Unitree Go2 quadruped with off-board inference. Code, per-episode logs and a robot video will be released at \href{https://anonymous.4open.science/r/structure-over-reasoning-PAGE}{https://anonymous.4open.science/r/structure-over-reasoning-PAGE}.
\end{abstract}

\section{Introduction}

Vision-language navigation asks an agent to follow a natural-language instruction to a goal it has never seen. The field is moving from panoramic graphs into photorealistic, physically simulated scenes such as VLNVerse \cite{vlnverse2025}, where the leading zero-shot agents place a vision-language model in the control loop and let it decide each move and each stop from panorama, history and full instruction \cite{zhou2024navgpt,chen2024mapgpt,long2024discussnav,qiao2025opennav}. The recipe needs no training, but it assumes a model large enough to reason about a whole route from a few images and to judge, from a single view, whether the goal has been reached.

A robot cannot always afford one: a cloud model adds latency and recurring cost, an onboard one must fit limited memory and compute. Qwen3-VL-4B \cite{qwen3vl2025} occupies \cWeightsGB{} GB and answers a step of our agent in \cOursStepS{} seconds on an \cGPU{}, against \cBaseStepS{} seconds when asked to reason at length. A smaller model does not make every part of navigation slightly worse; some parts fail while others hold. Shown one view and the goal phrase, the 4B model can usually tell whether that direction looks like the goal; across the variants we tried, its route reasoning over candidate directions stays at chance, its distance estimates are nearly constant, its grounding finds the right category but not the instance the instruction names, and its fine-grained stop votes fall within three metres of the goal less than a third of the time. A small-model navigator should use what stays reliable and move the rest into structure the model only feeds.

Our agent asks two narrow perceptual questions, which of twelve directions looks like the goal and whether the view justifies stopping, and treats the answers as evidence and proposals, not actions. A bird's-eye occupancy map built from depth supports a skeleton graph whose frontier nodes are the only sub-goals the agent adopts, the moves being chosen among the benchmark's waypoint candidates, and a route-blind semantic field over the same map accumulates the directional scores with a tempered update, so that the field, not the model, remembers where the goal probably is. When a fine-grained instruction carries route clauses the model may also propose a frontier through a chain-of-thought query \cite{wei2022cot}, gated and coupled conservatively; on coarse-grained instructions, which name only the destination, or without a usable proposal, the field drives alone. A stop vote is likewise checked against the trail's evidence rather than executed, and an episode nearing its budget without an accepted vote walks back to the best position seen. One policy with a single granularity-specific parameter serves both.

Among the components removed in our ablations, termination produces the largest drop in success. On fine-grained instructions an agent that never stops and walks the full forty-step budget of our ablations passes within three metres of the goal in \aBoneOSR{}\% of episodes but succeeds in \aBoneSR{}\%; the full terminator raises this to \aBzeroSR{}\% in about half the steps. Reaching is not solved either, since even that agent stays beyond three metres in more than a fifth of episodes, but the gap shows that nearing a good position and ending there are separate problems. The ablations agree elsewhere: the field and the route channel contribute nearly additively, and the terminator is the largest single lever. Thirteen attempts to hand responsibility back to the model, from longer route reasoning and sub-goal decomposition to grounding-based final approach and model-selected return positions, establish no reliable gain, and two lose beyond the run-to-run spread. Throughout, the model supplies local evidence and structure mediates it.

The contributions are fourfold. First, a decomposition of zero-shot navigation around a small vision-language model into a route-blind semantic field, a skeleton graph and a gated route channel, with the field update, the coupling and the action utility in closed form and one policy for both granularities. Second, a two-stage terminator with a trail-consistency gate and a budget-forced return proved affordable under the benchmark's displacement protocol, with an ablation that quantifies each stage. Third, on the official VLNVerse subset, with every zero-shot baseline re-run in one installation at the same thirty-step budget, the agent exceeds the strongest baseline in mean success by \oFineThirtyGain{} points on fine-grained instructions, with lower navigation error and higher path efficiency, and by \oCoarseGain{} on coarse-grained ones, all three-run means, and runs on a Unitree Go2 quadruped. Fourth, ablations and thirteen attempts to give the model more of the navigation, mapping what this model and prompt family support reliably and what still needs explicit structure.

\section{Related Work}

\textbf{Vision-language navigation.} The task was introduced on panoramic graphs of real buildings \cite{anderson2018r2r} and extended to finer-grained instructions \cite{ku2020rxr}, object-centric goals \cite{qi2020reverie} and continuous environments \cite{krantz2020vlnce}, where trained agents pair a waypoint predictor with a learned policy \cite{krantz2021waypoint,an2024etpnav,an2023bevbert} or train a vision-language-action model end to end \cite{zhang2024navid,cheng2025navila}. VLNVerse \cite{vlnverse2025} moves the task into a physics-based simulator with real-scale scenes and two instruction granularities; we adopt its official evaluation subset and protocol without change.

\textbf{Zero-shot navigation with language models.} A line of work replaces the trained policy by a language model that reads a textual description of the panorama and the history and chooses among candidate directions \cite{zhou2024navgpt,long2024discussnav,lin2025navcot}, with a map-like textual memory in MapGPT \cite{chen2024mapgpt}, open-source models in continuous environments in Open-Nav \cite{qiao2025opennav} and sub-task decomposition in InstructNav \cite{long2024instructnav}. For object-goal search, language models score frontiers \cite{zhou2023esc,yu2023l3mvn,yin2024sgnav} and vision-language embeddings build value maps over the explored area \cite{yokoyama2024vlfm}; VoroNav reasons over a reduced Voronoi graph with captions of far-sighted views \cite{wu2024voronav}, which our skeleton graph and frontier captions resemble. Two things separate our agent from this family: the model never chooses a waypoint, since it scores sectors and proposes frontiers while a belief over the map decides where to go, and it never decides to stop on its own, since its vote is verified against the trail; Section~\mbox{\ref{sec:negative}} shows that with a 4B model both restrictions are what make the agent work.

\textbf{Semantic maps and stopping.} Occupancy grids \cite{elfes1989occupancy,thrun2005probabilistic}, frontier-based exploration \cite{yamauchi1997frontier} and semantic exploration policies \cite{chaplot2020semexp} are the classical basis of our map, and open-vocabulary maps \cite{huang2023vlmaps} show that language-conditioned scores can be written into spatial memory; our field differs in being a probability distribution over reachable cells, updated by tempered exponential evidence accumulation and blind to the route. When to stop is a known weakness of VLN agents, addressed by progress monitors \cite{ma2019selfmonitoring} and backtracking \cite{ke2019tacticalrewind} on the panoramic graph; the VLNVerse baselines execute the model's stop directly \cite{vlnverse2025}. We keep the vote as a proposal, add two cheap verifications and quantify each.

\section{Method}
\label{sec:method}

\textbf{Task formulation.} VLNVerse places a cylindrical agent in physically simulated interior scenes rendered at real scale in Isaac Sim 6.0.1 and evaluates instruction following at two granularities: fine-grained instructions describe the route clause by clause and end with the goal; coarse-grained instructions name the goal object and its surroundings without a route. At each step the agent observes a twelve-view RGB and depth panorama and its pose, and moves by selecting a waypoint from the toolkit's waypoint predictor. We use the benchmark's Tel-Hop protocol: the agent is teleported to the chosen waypoint, and only the terminus is checked against the physical scene. An episode ends when the agent stops or its budget of $T$ steps is exhausted, and succeeds if the final position is within $r = 3$ m of the goal.

\textbf{Overview.} The design rests on one division of labour: the model answers local questions, structure handles the global problem. Shown a single view and the goal phrase, Qwen3-VL-4B can say whether that direction looks like the goal and whether the view justifies stopping; asked which candidate direction continues a route, it answers at chance level (Section~\ref{sec:negative}). The agent therefore asks only the local questions and writes the answers into a semantic field over a bird's-eye map that is route-blind by construction, so that it cannot become a second, failing route reasoner. Where to go next and when to end the episode are decided from geometry and bookkeeping: a skeleton graph supplies frontiers and distances, a utility over the waypoint candidates picks the move, and a terminator verifies each stop vote against the trail (Fig.~\ref{fig:arch}).

\begin{figure*}[t]
\vspace*{\topfloatshim}
\centering
\includegraphics[width=0.94\textwidth]{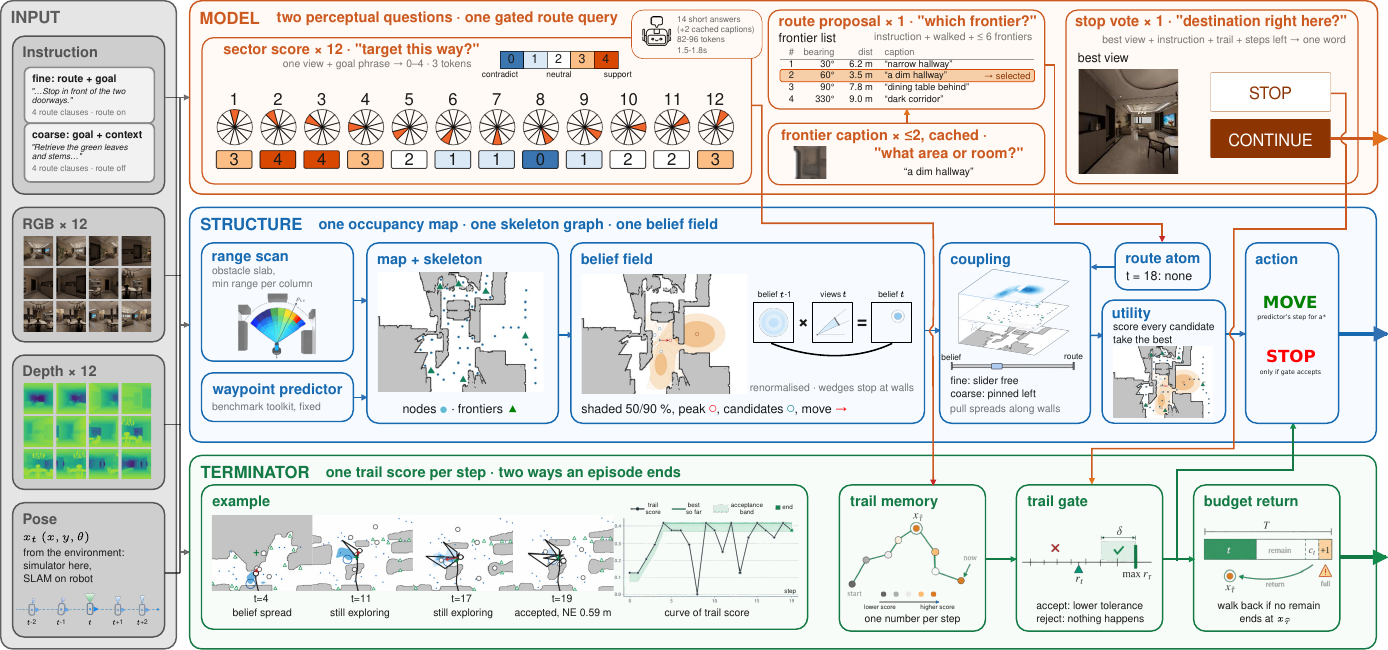}
\caption{One navigation step in four lanes. \textbf{Inputs} (grey): the instruction, twelve RGB views, the matching depth views and the pose. \textbf{The model} (orange): a five-point sector score per view, a stop vote on the best-scoring view and, on route-bearing instructions, a gated chain-of-thought query over the frontier list. \textbf{Structure} (blue): depth becomes a range scan and an occupancy map, the map a skeleton graph with frontier nodes, and the sector scores a route-blind belief field; the proposal enters the field through a gate closed on coarse-grained instructions, and a utility over the waypoint candidates picks the move. \textbf{The terminator} (green): a vote is accepted only within a tolerance $\delta$ of the best trail score; otherwise the agent walks back to that position when the budget runs short. The model, structure and terminator panels and the strip below them are real data from one fine-grained episode.}
\label{fig:arch}
\end{figure*}

\subsection{Bird's-Eye Map and Skeleton Graph}

In each of the twelve depth views, which span the full circle with a ninety-degree horizontal field of view each, every column is reduced to the nearest surface whose height lies within an obstacle slab around the camera, which gives a range scan around the agent, with a column that sees only floor or ceiling counted as free to the trusted range, and the scan is integrated into an occupancy grid \cite{elfes1989occupancy,thrun2005probabilistic} with a trusted range $R$. Free cells reachable from the agent's position $x_t$ form the support $S_t$ of everything that follows, and $d_t(x, y)$ denotes the geodesic distance between two cells of $S_t$, infinite when no free path connects them. The geodesic ball $B_t(x) = \{\, y \in S_t : d_t(x, y) \le r \,\}$ uses the benchmark's success radius $r = 3$ m, so that every readout of the field measures mass that would count as success. A skeleton is extracted from the free space, in the spirit of the reduced Voronoi graph of VoroNav \cite{wu2024voronav}, and yields a graph whose nodes are corridor and room centres, with median edge length $\ell_t$, and whose frontier nodes $F_t$ lie on the boundary between known free space and the unknown \cite{yamauchi1997frontier}. Local motion candidates $a \in A_t$, the actual steps of one to a few metres to a position $x_a$, are produced by the waypoint predictor shipped with the benchmark toolkit; the agent only chooses among them.

\subsection{Route-Blind Semantic Field}

The field is a belief $b_t$ over $S_t$ about where the goal is, uniform at the start, $b_0 = 1/|S_0|$. At each step the model is shown each of the twelve views in turn and asked how likely the goal phrase lies in that direction, visible or further along, on a five-point scale from zero to four. The answer $\sigma_{t,v}$ for view $v$ is centred to $s_{t,v} = (\sigma_{t,v} - 2)/2 \in [-1, 1]$, so that a score below two is evidence against the direction, and projected into the map as an evidence image over the thirty-degree wedge $W_{t,v}$ of that view,
\begin{equation}
\begin{aligned}
s_t(x) &= \sum_{v=1}^{12} s_{t,v}\, \mathbf{1}\!\left[x \in W_{t,v}\right], \\
W_{t,v} &= \{\, x \in S_t : \theta_t(x) \in \Theta_v,\ \|x - x_t\| \le \rho_{t,v} \,\},
\end{aligned}
\label{eq:evidence}
\end{equation}
where $\theta_t(x)$ is the bearing of $x$ from the agent, $\Theta_v$ the angular range of view $v$ and $\rho_{t,v}$ the free range the depth scan observed in that view, so that evidence never crosses a wall. The prompt also asks for a rough distance, but the answers are nearly constant, so the evidence is spread over the visible wedge. The belief is updated in the log domain, as in log-odds occupancy mapping \cite{thrun2005probabilistic}, and then mixed with a small uniform leak. The five-point scores are not calibrated likelihoods, so the update is exponential evidence accumulation rather than a posterior, and $b_t$ is a belief in that sense only:
\begin{equation}
\begin{aligned}
\tilde b_t(x) &= \frac{b_{t-1}(x)\, e^{\beta_t s_t(x)}}{\sum_{y \in S_t} b_{t-1}(y)\, e^{\beta_t s_t(y)}}, \\
b_t &= (1 - \nu)\, \tilde b_t + \frac{\nu}{|S_t|},
\end{aligned}
\label{eq:field}
\end{equation}
with a small leak $\nu$. The leak keeps every region recoverable, since the coupling below can only redistribute mass. The evidence strength $\beta_t = \beta_0\, c^{\mathrm{p}}_t\, 2^{-n_t / h}$ tempers the update: $\beta_0$ is the base strength, $c^{\mathrm{p}}_t \in [0, 1]$ the fraction of the twelve views for which the model returned a parseable score, $n_t$ the number of earlier steps taken from the current map cell, and $h$ the number of revisits after which the strength halves, so that a second look from the same place does not count as a second independent observation. Cells that enter $S_t$ as the map grows are initialised at the mean of the previous belief, $1/|S_{t-1}|$, which keeps the normalised entropy $\bar H_t = -\sum_{x \in S_t} b_t(x) \log b_t(x) / \log |S_t|$ comparable across steps as map grows; $\bar H_t$ weights field's voice in action utility below.

The field is route-blind by construction: the prompt carries only the goal phrase, never a route clause, and a guard rejects any phrase with directional vocabulary.

\subsection{Route Channel and Gated Coupling}

Fine-grained instructions describe a route before naming the goal. The instruction is parsed into clauses, of which $k_r$ are route-bearing and one is terminal, and the route share $\pi_r = k_r / (k_r + 1)$ is fixed for the episode. A route channel proposes a frontier at each step with a chain-of-thought query \cite{wei2022cot} over the skeleton, in the manner of map-prompted navigation \cite{chen2024mapgpt}. The model receives the instruction, the steps and metres walked so far and a numbered list of the unexplored frontiers ordered by bearing, each described by its clock bearing, its distance and a cached one-phrase caption of the view in that direction as in VoroNav \cite{wu2024voronav}, and answers with the index of the frontier the route most likely continues toward, or zero if the destination is probably within a couple of metres; the proposal reaches the belief only when $k_r > 0$, through the gate below. The chosen frontier is the step's proposal, indexed $\rho$ and located at cell $x_\rho$, with confidence $c^\rho_t = 1$; a silent answer, a malformed one or the answer zero gives $c^\rho_t = 0$. The proposal conditions the field on a geodesic kernel around that cell,
\begin{equation}
\begin{aligned}
K_\rho(x) &= e^{-d_t(x, x_\rho)^2 / 2 \sigma_t^2}, \\
p_\rho(x) &= \frac{b_t(x)\, K_\rho(x)}{\sum_{y \in S_t} b_t(y)\, K_\rho(y)},
\end{aligned}
\label{eq:kernel}
\end{equation}
with bandwidth $\sigma_t^2 = \sigma_f^2 + \ell_t^2$, a floor $\sigma_f$ added to the median edge length, and $\sigma_t$ capped at the trusted range $R$. Because $p_\rho$ is the field itself reweighted, the injection moves mass toward the proposal without inventing mass where the field has none. The coupled belief is the contraction
\begin{equation}
\begin{aligned}
\hat b_t &= (1 - \tilde\lambda_t)\, b_t + \tilde\lambda_t\, p_\rho , \\
\tilde\lambda_t &= \lambda_{\max}\, \pi_r\, c^\rho_t ,
\end{aligned}
\label{eq:couple}
\end{equation}
with $\lambda_{\max} < 1$, so that even a confident wrong proposal leaves a fixed share of the belief with the route-blind field. When the instruction has no route clause, as in coarse-grained instructions, $\pi_r = 0$ and the gate is closed, and when the model declines to answer, $c^\rho_t = 0$; in both cases the field alone drives, so the same agent serves both granularities, with commitment penalty the only parameter set per granularity.

\textbf{Action utility.} Each candidate move $a \in A_t$ is scored by
\begin{equation}
\begin{aligned}
U_t(a) &= w^g_t \sum_{x \in B_t(x_a)} \hat b_t(x)
 + w^r_t\, e^{-d_t(x_a, x_\rho)/\gamma} \\
 &+ \mu\,(1 - \tilde\lambda_t)\, f_t(a)
 - \eta\, d_t(x_t, x_a) / R \\
 &- \kappa\, \mathbf{1}\!\left[\, \angle(\theta_a, \theta_{t-1}) > \pi/2 \,\right],
\end{aligned}
\label{eq:utility}
\end{equation}
and the agent takes $\argmax_a U_t(a)$. The five terms are the coupled belief mass within the success radius of the candidate; a pull toward the proposed frontier that decays with geodesic distance on a scale $\gamma$; an exploration bonus $f_t(a)$, the fraction of unknown cells in a disc of radius $r_e$ around the candidate, scaled by $1 - \tilde\lambda_t$ with weight $\mu$ so that frontiers the coupling already favours are not rewarded twice; the travel cost with weight $\eta$, normalised by the trusted range; and a commitment penalty $\kappa$ on reversals. The first two weights share the two channels' confidences, $w^g_t = u^g_t / (u^g_t + u^r_t)$ and $w^r_t = u^r_t / (u^g_t + u^r_t)$ with $u^g_t = 1 - \bar H_t$ and $u^r_t = \omega\, \pi_r c^\rho_t$, so a sharp field speaks louder than a flat one; without a valid proposal, $c^\rho_t = 0$, the route term is dropped, $w^r_t = 0$ and $w^g_t = 1$. The commitment penalty matters more than its size suggests. A candidate whose bearing $\theta_a$ differs from the bearing $\theta_{t-1}$ of the last move by more than ninety degrees, $\angle(\cdot,\cdot)$ denoting the difference taken on the circle, pays the fixed penalty $\kappa$ on fine-grained instructions; on coarse-grained instructions, where no route is followed, $\kappa = 0$. This commitment hysteresis suppresses the oscillation of an agent that re-evaluates every frontier every step.

\subsection{Two-Stage Terminator}
\label{sec:terminator}

Stopping is modelled as an action the model proposes and the trail verifies. At each step the model is asked, with the best-scoring view, the instruction and a per-step record of its recent movement in view, whether the destination is right there so that stopping here completes the task, and answers with one word. A stop vote is not executed directly. The agent keeps the trail score
\begin{equation}
r_t = \frac{1}{12} \sum_{v=1}^{12} s_{t,v},
\label{eq:trail}
\end{equation}
the mean centred sector score the field received at each visited position, and accepts the vote only if the current position is at least as good as any position on the trail up to a tolerance $\delta$:
\begin{equation}
r_t \;\ge\; \max_{\tau \le t} r_\tau \;-\; \delta , \qquad t \ge t_0 ,
\label{eq:gate}
\end{equation}
a rejected vote leaves the agent exploring, and votes in the first $t_0$ steps are ignored, where the trail is too short for the gate to mean anything.

The second stage guarantees a deliberate terminus for episodes without an accepted vote, a budgeted form of backtracking \cite{ke2019tacticalrewind}. Let $\hat\tau_t = \argmax_{\tau \le t} r_\tau$ be the step with the best trail score, $R_t$ the longest step the waypoint predictor has offered so far, and
\begin{equation}
c_t = \left\lceil \frac{\|x_t - x_{\hat\tau_t}\|}{R_t} \right\rceil
\label{eq:cost}
\end{equation}
the number of moves needed to walk back to it, with $\|\cdot\|$ the Euclidean distance. With a budget of $T$ steps the agent keeps exploring only while
\begin{equation}
T - t \;>\; c_t + 1 ,
\label{eq:return}
\end{equation}
and otherwise turns, walks back to $x_{\hat\tau_t}$ in moves of at most $R_t$ and ends there. The margin of one step is what makes the walk back always affordable under that protocol; on a robot, where the return must follow the free space, the trail itself is retraced instead.

\begin{proposition}
Under the benchmark's displacement protocol, in which a move of any bearing and any length up to $R_t$ is available, and with $R_t$ non-decreasing in $t$, if the agent continues at step $t$ only when Eq.~\eqref{eq:return} holds, then at every step $t'$ it reaches, $T - t' \ge c_{t'}$: the remaining budget covers the walk back.
\end{proposition}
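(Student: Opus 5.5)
The proof is an induction on the steps $t'$ the agent reaches. The invariant is $T - t' \ge c_{t'}$. The argument splits into two phases: the exploration phase, in which Eq.~\eqref{eq:return} held at every earlier step, and the return phase, which starts the first time the test fails.

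\emph{Base and exploration phase.} At $t' = 0$ the trail holds only the current position, so $\hat\tau_0 = 0$, $c_0 = 0$ and $T \ge 0$. Suppose the agent continues at step $t$, which means $T - t > c_t + 1$. It then makes one exploratory move of length at most $R_t$ and reaches step $t+1$. Since $\max_{\tau\le t+1} r_\tau$ is either the old maximum or $r_{t+1}$, the best trail position $x_{\hat\tau_{t+1}}$ is either $x_{\hat\tau_t}$ or the new position $x_{t+1}$.
\begin{itemize}
\item If it is $x_{t+1}$, then $c_{t+1} = 0$ and the invariant is immediate.
\item Otherwise the triangle inequality gives $\|x_{t+1} - x_{\hat\tau_t}\| \le \|x_t - x_{\hat\tau_t}\| + R_t$. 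Because $R_{t+1} \ge R_t$, we get $\|x_{t+1} - x_{\hat\tau_t}\|/R_{t+1} \le \|x_t - x_{\hat\tau_t}\|/R_t + 1$, and taking ceilings yields $c_{t+1} \le c_t + 1$.
\end{itemize}
Combined with the continuation test, $T - (t+1) > c_t \ge c_{t+1} - 1$. Since both sides are integers, $T - (t+1) \ge c_{t+1}$, which is the invariant at $t+1$.

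\emph{Return phase.} Let $t^\ast$ be the first step at which Eq.~\eqref{eq:return} fails. The exploration phase guarantees $T - t^\ast \ge c_{t^\ast}$. From here the target $x_{\hat\tau}$ is frozen, because the agent walks back and stops without adding competing trail positions. If returned positions are re-scored, they can only keep the argmax or move it to the current position, which again gives $c = 0$. The protocol allows moves of any bearing and any length up to $R_t$, so each return move goes straight toward the target and reduces the Euclidean distance by $\min(\text{distance}, R_t)$. Therefore $\lceil \text{dist}/R\rceil$ drops by exactly one per step, and the drop is at least one even if $R$ grows, since the ceiling is non-increasing in $R$. Thus $c_{t'+1} \le c_{t'} - 1$ while $c_{t'} > 0$, and the quantity $T - t' - c_{t'}$ never decreases. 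The agent reaches $c = 0$ at $x_{\hat\tau}$ within budget and ends there.

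\emph{Main obstacle.} The delicate step is the integer rounding in the exploration step. Turning the strict inequality $T - t > c_t + 1$ into $T - (t+1) \ge c_{t+1}$ needs $c_{t+1} \le c_t + 1$, which in turn needs both the triangle-inequality bound and the monotonicity of $R_t$. This is exactly where the hypothesis $R_t$ non-decreasing enters: if $R$ could shrink, the ceiling could jump by more than one in a single step and the invariant would fail. I would present this step carefully with the ceiling identity $\lceil a + 1\rceil = \lceil a\rceil + 1$. I would also remark that the margin of one is actually slack, since the strict inequality already provides the needed unit.
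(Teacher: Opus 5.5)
Your exploration-phase induction is the paper's proof. It uses $c_{t+1}\le c_t+1$ from the triangle inequality and $R_t\le R_{t+1}$, the reset $c_{t+1}=0$ when the best trail position moves to $x_{t+1}$, integrality to turn $T-t>c_t+1$ into $T-(t+1)\ge c_{t+1}$, and $c_0=0$. Your return-phase argument adds something the paper's proof leaves implicit: it covers the steps reached after turning, where each straight move lowers $\lceil d/R\rceil$ by at least one, so $T-t'-c_{t'}$ never decreases.

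Drop the closing remark, however, because the margin of one is not slack. Your own derivation uses both units. The $+1$ gives $T-(t+1)>c_t$, and the strictness rounds $T-(t+1)>c_{t+1}-1$ up to $T-(t+1)\ge c_{t+1}$. The paper's test is therefore exactly $T-t\ge c_t+2$, which is what one exploratory move needs: it spends one step and can raise $c$ by one.

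With the test $T-t>c_t$ alone, the invariant fails. Take $T=3$, $R_t\equiv R$, and two moves of length $R$ straight away from $x_0$ while $x_0$ stays the best trail position. The tests $3>0$ and $2>1$ both pass, yet $c_2=2>T-2=1$.
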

\begin{proof}
One exploration step moves the agent by at most $R_t \le R_{t+1}$, so $c_{t+1} \le c_t + 1$; if the best position changes at $t+1$ it is the current one and the cost is zero. Continuing at $t$ requires $T - t > c_t + 1$, hence $T - (t+1) \ge c_t + 1 \ge c_{t+1}$, and $c_0 = 0$.
\end{proof}

Neither stage costs a model call: the gate reads the sector scores already computed for the field, and the return is geometry. The budget $T$ is fixed by the evaluation protocol, and the terminator adds only the tolerance $\delta$ and delay $t_0$.

\section{Experiments}
\label{sec:experiments}

\subsection{Setup}

\textbf{Benchmark.} We evaluate on the official zero-shot test subset of VLNVerse, \nFineEps{} fine-grained and \nCoarseEps{} coarse-grained episodes released with the benchmark's toolkit together with its ground-truth paths. All rows use the Tel-Hop protocol of Section~\ref{sec:method}; the benchmark's Strict setting, under which collisions are counted, is outside our scope, so no collision rate is reported. Every row of Table~\ref{tab:main}, ours and the baselines', comes from the same installation at same thirty-step budget.

\textbf{Metrics.} We report the benchmark's six: trajectory length TL and navigation error NE in metres, success rate SR, oracle success rate OSR, success weighted by path length SPL \cite{anderson2018evaluation} and normalised dynamic time warping nDTW \cite{ilharco2019ndtw}, the last computed from the logged trajectories. SR is the headline number; SPL and nDTW separate walking to the goal from wandering into it, and the gap between OSR and SR is the part of the failure that belongs to stopping rather than to search.

\textbf{Baselines.} The comparison class is the benchmark's four zero-shot agents: Text-history and Map-history, its re-implementations of NavGPT \cite{zhou2024navgpt} and MapGPT \cite{chen2024mapgpt}, which keep a textual or map-like memory of the explored space, each with and without chain-of-thought prompting \cite{wei2022cot}, all driven by Qwen3-VL-4B, the same model as ours. Every baseline row is our own run of the benchmark's agent code and evaluator. Trained agents are outside the comparison class, as in the benchmark's own zero-shot table.

\textbf{Implementation details.} Qwen3-VL-4B \cite{qwen3vl2025} in bfloat16 with greedy decoding is the only model and answers the sector scores, the route proposals and the stop votes. The map has a resolution of $10$ cm and a trusted range of $R = 8$ m, and the obstacle slab runs from $1.0$ m below to $0.6$ m above the camera. The field uses $\nu = 0.01$, $h = 3$ and $\beta_0 = 16$, the last selected by offline replay of recorded validation-split episodes whose scenes do not appear in the evaluation subset. The coupling uses $\sigma_f = 1.5$ m and $\lambda_{\max} = 0.7$, so that a confident wrong proposal leaves at least 30\% of the belief with the field, and the route prompt lists at most six frontiers. The utility uses $\gamma = 4$ m, $r_e = 2$ m, $\omega = 0.7$, $\mu = 0.15$ and $\eta = 0.1$, with $\kappa = 0.1$ on fine-grained and $0$ on coarse-grained instructions; the terminator uses $\delta = 0.05$ and $t_0 = 3$. Peak memory is \cPeakGB{} GB, so the agent runs on a consumer GPU: one step takes \cOursStepSsmall{} seconds on an \cGPUsmall{}, and Table~\ref{tab:cost} breaks that step down by query.

\subsection{Main Results}
\label{sec:main}

Table~\ref{tab:main} compares the agent with the four zero-shot baselines. Our rows and the coarse-grained baseline rows are three-run means; the fine-grained baseline rows are two-run means, except Map-history with chain of thought, a three-run mean. The two chain-of-thought agents are the strongest baselines; the two without rarely stop, walking to the budget in 71 to 80\% of episodes against 28 to 48\% for the chain-of-thought agents, which is why the path efficiency of the former collapses on fine-grained instructions; this differs from their published description, so their rows represent what that code does in this installation rather than the best those designs can do.

\begin{table*}[t]
\vspace*{\topfloatshim}
\caption{Zero-shot results on the official VLNVerse subset at a thirty-step budget; best value per column in bold.}
\label{tab:main}
\centering
\scriptsize
\setlength{\tabcolsep}{4.5pt}
\begin{tabular}{l cccccc cccccc}
\toprule
& \multicolumn{6}{c}{Fine-grained instructions} & \multicolumn{6}{c}{Coarse-grained instructions} \\
\cmidrule(lr){2-7} \cmidrule(lr){8-13}
Method & TL & NE$\downarrow$ & SR$\uparrow$ & OSR$\uparrow$ & SPL$\uparrow$ & nDTW$\uparrow$ & TL & NE$\downarrow$ & SR$\uparrow$ & OSR$\uparrow$ & SPL$\uparrow$ & nDTW$\uparrow$ \\
\midrule
Text-history \cite{vlnverse2025,zhou2024navgpt} & \rTextFineTL{} & \rTextFineNE{} & \rTextFineSR{} & \rTextFineOSR{} & \rTextFineSPL{} & \rTextFineNDTW{} & \rTextCoarseTL{} & \rTextCoarseNE{} & \rTextCoarseSR{} & \rTextCoarseOSR{} & \rTextCoarseSPL{} & \rTextCoarseNDTW{} \\
Map-history \cite{vlnverse2025,chen2024mapgpt} & \rMapFineTL{} & \rMapFineNE{} & \rMapFineSR{} & \textbf{\rMapFineOSR{}} & \rMapFineSPL{} & \rMapFineNDTW{} & \rMapCoarseTL{} & \rMapCoarseNE{} & \rMapCoarseSR{} & \textbf{\rMapCoarseOSR{}} & \rMapCoarseSPL{} & \rMapCoarseNDTW{} \\
Text-history + CoT \cite{vlnverse2025,zhou2024navgpt,wei2022cot} & \rTextCoTFineTL{} & \rTextCoTFineNE{} & \rTextCoTFineSR{} & \rTextCoTFineOSR{} & \rTextCoTFineSPL{} & \rTextCoTFineNDTW{} & \rTextCoTCoarseTL{} & \rTextCoTCoarseNE{} & \rTextCoTCoarseSR{} & \rTextCoTCoarseOSR{} & \rTextCoTCoarseSPL{} & \rTextCoTCoarseNDTW{} \\
Map-history + CoT \cite{vlnverse2025,chen2024mapgpt,wei2022cot} & \rMapCoTFineTL{} & \rMapCoTFineNE{} & \rMapCoTFineSR{} & \rMapCoTFineOSR{} & \rMapCoTFineSPL{} & \rMapCoTFineNDTW{} & \rMapCoTCoarseTL{} & \rMapCoTCoarseNE{} & \rMapCoTCoarseSR{} & \rMapCoTCoarseOSR{} & \textbf{\rMapCoTCoarseSPL{}} & \textbf{\rMapCoTCoarseNDTW{}} \\
\midrule
Ours & \oFineThirtyTL{} & \textbf{\oFineThirtyNE{}} & \textbf{\oFineThirtySR{}} & \oFineThirtyOSR{} & \textbf{\oFineThirtySPL{}} & \textbf{\oFineThirtyNDTW{}} & \oCoarseTL{} & \textbf{\oCoarseNE{}} & \textbf{\oCoarseSR{}} & \oCoarseOSR{} & \oCoarseSPL{} & \oCoarseNDTW{} \\
\bottomrule
\end{tabular}
\end{table*}

On fine-grained instructions the agent leads every baseline on success, navigation error and path efficiency, and matches the best baseline on nDTW: success rises from \rMapCoTFineSR{}, the best baseline, to \oFineThirtySR{}, with runs at \oFineThirtyRuns{}; navigation error falls from \rMapCoTFineNE{} to \oFineThirtyNE{} metres and path efficiency rises from \rMapCoTFineSPL{} to \oFineThirtySPL{}; trajectory length, \oFineThirtyTL{} metres, lies between the two chain-of-thought agents and well below the two history-only ones. Oracle success, \oFineThirtyOSR{}, sits below the two history-only agents, which reach many goals and stop at almost none of them.

On coarse-grained instructions the margin is smaller: success rises from \rTextCoTCoarseSR{}, the strongest baseline, to \oCoarseSR{}, with runs at \oCoarseRuns{}, and navigation error falls to \oCoarseNE{} metres. Three columns go to baselines at the two ends of the stopping trade-off: Map-history reaches \rMapCoarseOSR{}\% oracle success at a path efficiency of \rMapCoarseSPL{}, converting almost none of it, while Map-history with chain of thought stops after \rMapCoTCoarseTL{} metres for the best path efficiency and nDTW and succeeds \oCoarseGainMapCoT{} points less often; our agent lies between them and leads the metric requires both reaching and stopping; a twenty-step variant walks 24.0 metres, shorter than three of the four baselines, and reaches \oCoarseBudgetMatchedSR{} success over three runs.

Because every row is run on the same episodes, the comparison can be made per episode, restricted to the episodes on which every agent ran, a few failing to load in our installation. Bootstrapping the per-episode success difference, with every baseline pooled over its runs, our agent leads Map-history with chain of thought by \pFineMapCoT{} points on fine-grained instructions, 95\% interval \pFineMapCoTCI{}, and by \pCoarseMapCoT{} on coarse-grained, interval \pCoarseMapCoTCI{}. The interval excludes zero against every baseline on fine-grained instructions and against three of the four on coarse-grained; the exception is Text-history with chain of thought, \pCoarseTextCoT{} points with interval \pCoarseTextCoTCI{}.

\textbf{Relation to the published numbers.} The benchmark reports, for the same model in the same Tel-Hop setting, \bFineSR{} success at \bFineTL{} metres for Map-history with chain of thought on fine-grained instructions and \bMapCoarseSR{} for Map-history on coarse-grained ones. Our re-run of the released code reaches \rMapCoTFineSR{} at thirty steps. The released configuration does not reproduce the reported stopping behaviour under a 30-step budget; we therefore restrict all comparative claims to controlled re-runs under an identical budget. The published trajectory length further suggests that the reported fine-grained result was obtained under a substantially larger interaction budget. Our own agent reaches \oFineThirtySR{} at thirty steps and \aBzeroSR{} at the forty steps used in the ablations, while achieving \oCoarseSR{} on coarse-grained instructions. Accordingly, the comparisons in Table~\ref{tab:main} concern agents executed in the same installation, with the same model, evaluator, episode set, and thirty-step budget, rather than comparisons against the published table.

\subsection{Ablations and Sensitivity}
\label{sec:ablation}

Tables~\ref{tab:stop} and~\ref{tab:couple} remove one piece at a time, keeping everything else at the frozen configuration. Fine-grained ablations use a forty-step budget, where the frozen agent reaches \aBzeroSR{} (\oFineRuns{}) against \oFineThirtySR{} at the thirty steps of Table~\ref{tab:main}, and their deltas are relative to \aBzeroSR{}; coarse-grained ablations use thirty steps. Full-agent rows and the fine-grained conservative-prompt row are three-run means, the coarse-grained no-gate and no-return rows two-run means, and every other row a single run.

\begin{table}[tp]
\caption{Terminator ablation; fine-grained rows at forty steps, coarse-grained at thirty.}
\label{tab:stop}
\centering
\scriptsize
\setlength{\tabcolsep}{3.2pt}
\begin{tabular}{l ccccc c}
\toprule
Terminator & SR$\uparrow$ & SPL$\uparrow$ & Steps & Vote & Return & $\Delta$SR \\
\midrule
\multicolumn{7}{l}{\emph{Fine-grained instructions}}\\
Full agent          & \aBzeroSR{}  & \oFineSPL{} & 21.0 & 70\% & 30\% & -- \\
No stopping         & \aBoneSR{}   & \aBoneSPL{} & 40.0 & 0\% & 0\% & $-$\aStackGain{} \\
No trail gate       & \aBtwoSR{}   & 14.18 & 14.8 & 41\% & 13\% & $-$\aGateGain{} \\
No budget return    & \aBthreeSR{} & 11.70 & 22.7 & 65\% & 0\% & $-$\aReturnGain{} \\
Conservative prompt & \aBfourSR{}  & 11.48 & 26.2 & 54\% & 46\% & $-0.2$ \\
\midrule
\multicolumn{7}{l}{\emph{Coarse-grained instructions}}\\
Full agent          & \oCoarseSR{}       & \oCoarseSPL{}       & 20.6 & 52\% & 48\% & -- \\
No stopping         & \aCoarseBoneSR{}   & \aCoarseBoneSPL{}   & 30.0 & 0\%  & 0\%  & $-23.1$ \\
No trail gate       & \aCoarseBtwoSR{}   & \aCoarseBtwoSPL{}   & \aCoarseBtwoSteps{} & 35\% & 30\% & $-4.0$ \\
No budget return    & \aCoarseBthreeSR{} & \aCoarseBthreeSPL{} & 21.2 & 52\% & 0\%  & $-4.8$ \\
Conservative prompt & \aCoarseBfourSR{}  & \aCoarseBfourSPL{}  & 27.0 & 15\% & 85\% & $0.0$ \\
\bottomrule
\end{tabular}
\end{table}

\textbf{The terminator, Table~\ref{tab:stop}.} Steps is the mean episode length, Vote and Return the fractions of episodes ended by an accepted vote and by the budget return. The no-stopping row is the contrast from the introduction: an agent that never stops passes within three metres of the goal in \aBoneOSR{}\% of fine-grained episodes and succeeds in \aBoneSR{}\%, so the stack is worth \aStackGain{} points there and 23.1 on coarse-grained instructions, and turns a path efficiency of \aBoneSPL{} into 13.63. The model votes to stop on about 460 fine-grained steps per run, of which 31\% fall within three metres of the goal, against 50\% of the votes the gate lets end an episode; on coarse-grained instructions the two figures are 47\% and 65\%. Removing the gate makes every vote terminal, in place when the trail agrees and otherwise by an immediate walk back, which ends the remaining 46\% and 35\% of episodes at 14.8 and \aCoarseBtwoSteps{} steps and costs \aGateGain{} and 4.0 points, with oracle success down to 48\%. Removing the budget return ends budget-exhausted episodes in place and costs \aReturnGain{} and 4.8 points. A conservative vote prompt leaves success unchanged, \aBfourSR{} and \aCoarseBfourSR{}, but stops much later: on coarse-grained instructions only 15\% of episodes end by a vote and path efficiency falls from \oCoarseSPL{} to \aCoarseBfourSPL{}. The gate matters more than the return on fine-grained instructions, the two are within a point on coarse-grained ones, and the wording matters least on both, so the terminator's value lies in verification rather than in how the model is asked.

\begin{table}[tp]
\caption{Guidance ablation; fine-grained at forty steps, coarse-grained at thirty.}
\label{tab:couple}
\centering
\scriptsize
\setlength{\tabcolsep}{2.6pt}
\begin{tabular}{l cccc cccc}
\toprule
& \multicolumn{4}{c}{Fine-grained} & \multicolumn{4}{c}{Coarse-grained} \\
\cmidrule(lr){2-5} \cmidrule(lr){6-9}
Guidance & SR$\uparrow$ & SPL$\uparrow$ & OSR$\uparrow$ & $\Delta$SR & SR$\uparrow$ & SPL$\uparrow$ & OSR$\uparrow$ & $\Delta$SR \\
\midrule
Field + route channel & \aBzeroSR{} & \oFineSPL{} & \oFineOSR{} & -- & \oCoarseSR{} & \oCoarseSPL{} & \oCoarseOSR{} & -- \\
Field only & \aCnoCoTSR{} & 11.68 & 53.23 & $-$\aCnoCoTDelta{} & \aCoarseFieldOnlySR{} & \aCoarseFieldOnlySPL{} & \aCoarseFieldOnlyOSR{} & $0.0$ \\
Route channel only & \aCnoFieldSR{} & 8.61 & 46.24 & $-$\aCnoFieldDelta{} & \aCoarseNoFieldSR{} & \aCoarseNoFieldSPL{} & \aCoarseNoFieldOSR{} & $-6.5$ \\
Neither & \aCnoneSR{} & 6.39 & 45.70 & $-16.3$ & \aCoarseNeitherSR{} & \aCoarseNeitherSPL{} & \aCoarseNeitherOSR{} & $-7.0$ \\
\bottomrule
\end{tabular}
\end{table}

\textbf{Guidance, Table~\ref{tab:couple}.} With neither source the agent keeps only the skeleton, the exploration bonus and the terminator, and reaches \aCnoneSR{} on fine-grained instructions; the two losses of \aCnoFieldDelta{} and \aCnoCoTDelta{} add up to within a fraction of a point of the joint loss of 16.3, so the field and the route channel contribute nearly independently. The field is the heavier component: removing it costs \aCnoFieldDelta{} points, drops oracle success to 46\% and stretches the walk to 51.5 metres. Removing the route channel costs \aCnoCoTDelta{} points. This gain occurs even though the channel's proposals are close to random by the distance-rank diagnostic (Section~\ref{sec:negative}), suggesting that its benefit need not come from consistently selecting the correct frontier. One possible role is to break ties when the semantic field is diffuse, but without a matched random-frontier control we do not attribute the gain specifically to route semantics. On coarse-grained instructions removing the channel leaves success exactly at \aCoarseFieldOnlySR{}, confirming that the gate is closed there by construction, and removing the field costs 6.5 to 7.0 points and a third of the path efficiency whether or not the channel is present.

\textbf{Sensitivity, budget and commitment.} Table~\ref{tab:sens} varies the two knobs of the trail gate, fine-grained at forty steps; every value here is a single run unless a run count is given. No setting moves success beyond the run-to-run spread, so the gate is insensitive to a tolerance between $0.02$ and $0.10$ and to a delay of up to ten steps; what changes is the path: a looser gate ends 76\% of episodes by a vote against 70\%, and a ten-step delay walks 45.6 metres per episode at \sMinStepsTenSPL{} against \oFineSPL{}. Budget and commitment pay only together: without the penalty success is \sBudgetThirty{} at thirty steps and \sBudgetForty{} at forty, whereas with $\kappa = 0.1$ it is \oFineThirtySR{} at thirty and \aBzeroSR{} at forty over three runs, and $\kappa = 0.2$ falls back to \sCommitPointTwo{}; at the thirty steps of Table~\ref{tab:main} the penalty is therefore the worse setting, by less than the spread. Repeated at thirty steps, the fine-grained ablations give, against \oFineThirtySR{}, $-18.5$ for the stack, $-5.6$ for the gate, $-8.8$ for the field, $+0.4$ for the budget return, $-0.7$ for the route channel and $-9.3$ for neither source, with $\delta = 0.02$ and $0.10$ at 36.02 and 33.33, delays of five and ten steps at 44.09 and 35.48, a conservative prompt at 40.86, and a single coarse-grained run at forty steps of \sCoarseBudgetForty{}. The return therefore contributes only under the larger budget; two single runs scored above the frozen configuration at thirty steps but were not adopted, the configuration having been frozen beforehand.

\subsection{What a Small Model Cannot Do}
\label{sec:negative}

At forty steps the agent passes within three metres of the goal in \oFineOSR{}\% of fine-grained episodes but stops correctly in \aBzeroSR{}\%, and the obvious remedies for that residual all give the model more of the navigation to reason about. We tested thirteen variants in batches, changing one component at a time; all were evaluated on the frozen forty-step configuration over three runs. Table~\ref{tab:neg} lists them. Two lose more than the run-to-run spread of \runNoise{} points; the other eleven land inside it, which says that at this number of repeats no reliable gain was observed, not that the change is exactly zero. The snapshot line-up led in a first run and settled at \nLineup{} over \nLineupRuns{}, then none was adopted.

\begin{table}[tp]
\vspace*{\topfloatshim}
\caption{Thirteen attempts to give the model more of the navigation, against \aBzeroSR{} at forty steps.}
\label{tab:neg}
\centering
\scriptsize
\setlength{\tabcolsep}{3pt}
\begin{tabular}{>{\raggedright\arraybackslash}p{2.42cm} r r @{\hspace{5pt}} >{\raggedright\arraybackslash}p{2.42cm} r r}
\toprule
Attempt & SR & $\Delta$ & Attempt & SR & $\Delta$ \\
\midrule
\multicolumn{6}{l}{\emph{Loses beyond the run-to-run spread}}\\
Twice reasoning budget & \nCoTFull{} & $-$10.2 & Sub-goal decomposition & \nSubgoal{} & $-$12.3 \\
\midrule
\multicolumn{6}{l}{\emph{Inside the run-to-run spread: no reliable gain observed}}\\
Frontier when silent        & \nFallback{}       & $-$4.5  & Context-landmark grounding   & \nTriang{}     & $-$4.5 \\
Route answer first & \nCoTAnswerFirst{} & $-$4.3 & Terminus by grounding & \nGroundStop{} & $-$3.4 \\
Per-view final approach     & \nApproach{}       & $-$2.9  & Room before object       & \nRoomTwo{}    & $-$1.3 \\
Grounding as evidence       & \nMinwalk{}        & $+$0.4  & Instruction-scaled vote delay    & \nGField{}     & $+$0.4 \\
One step toward peak    & \nNudge{}          & $+$0.4  & Snapshot line-up         & \nLineup{}     & $+$1.1 \\
Depth-masked stop view      & \nMask{}           & $+$1.4  &                          &                &         \\
\bottomrule
\end{tabular}
\end{table}

\begin{table}[tp]
\caption{Sensitivity of the gate tolerance $\delta$ and the vote delay, fine-grained, forty steps.}
\label{tab:sens}
\centering
\scriptsize
\setlength{\tabcolsep}{4pt}
\begin{tabular}{l ccc}
\toprule
Setting & SR$\uparrow$ & SPL$\uparrow$ & Vote \\
\midrule
$\delta = 0.02$                 & \sDeltaLow{}     & 14.15 & 61\% \\
$\delta = 0.05$, main           & \aBzeroSR{}      & \oFineSPL{} & 70\% \\
$\delta = 0.10$                 & \sDeltaHigh{}    & 12.98 & 76\% \\
\midrule
votes ignored for 3 steps, main & \aBzeroSR{}      & \oFineSPL{} & 70\% \\
votes ignored for 5 steps       & \sMinStepsFive{} & 10.83 & 70\% \\
votes ignored for 10 steps      & \sMinStepsTen{}  & \sMinStepsTenSPL{} & 59\% \\
\bottomrule
\end{tabular}
\end{table}

Two attempts probe the route channel, whose proposals are answered on about half of the steps because reasoning overruns its token budget. Asking for the answer first raises the answer rate from 49 to 81\%, and doubling the reasoning budget raises it to 76\%; neither raises success, and the doubled budget loses beyond the spread. The logs explain why: in the default prompt and both variants the chosen frontier's rank by true distance to the goal has a median between \nPickRank{} and 0.48, where 0.5 is random, so more proposals of random quality mean more random steering, and the truncation that mutes the channel half the time is, unintentionally, part of why it works.

Frontier-when-silent injects the nearest untargeted frontier whenever the model is silent; success falls 4.5 points, inside the spread but in the direction that steering which is not goal-directed costs more than it explores. More than half of the wrong stops lie three to six metres from the goal, and an oracle that walked three metres toward it at the moment of stopping would add nineteen points, yet asking the model view by view whether the goal is visible and near recovers none of it: its answers are close to uniform across the views. Decomposing the instruction into landmark phrases, as InstructNav does with sub-tasks \cite{long2024instructnav}, and tracking the current one destroys the final segment, which begins late with an empty field. Terminus by grounding replaces the visibility question by bounding boxes, which Qwen3-VL emits natively \cite{qwen3vl2025}, and reads their distance off the depth panorama; the boxes attach to the nearest object of the right category rather than to the goal the route defines, so 65\% of episodes end on a box judged near, of which 48\% are correct. Painting the same points into the field instead leaves success unchanged: evidence that finds the wrong instance is harmless in the field and useless as a terminus. Nudging 1.5 metres toward the field's best sector after an accepted vote, and greying out everything beyond three metres in the view the vote sees, both leave the precision of accepted votes near 50\%: a premature stop is triggered by the right category or room within three metres, not by the distant goal. The remaining four fail in the same place: choosing the return point among six trail snapshots averages \nLineup{}; grounding the goal with its context landmarks corroborates the wrong instance, since category and context co-occur in one room; an instruction-scaled vote delay blocks 81 early votes without changing their precision; and tracking the named room first raises oracle success to \nRoomTwoOSR{}, the highest of any variant, while success falls, because the stop decision at the object is no better once the room is reached.

Across all thirteen the same limit appears: the model recognises the goal phrase well enough to score a sector but does not localise it; its route reasoning is at chance, its distance judgements are constant and its boxes find the wrong instance. Among the designs we tried, moving responsibility from structure to the model never helped.

\begin{table}[tp]
\vspace*{\topfloatshim}
\caption{Cost of one navigation step, model calls only, median of twelve runs on an idle \cGPU{}; the baseline is timed with the same weights, decoding and image resolution.}
\label{tab:cost}
\centering
\scriptsize
\setlength{\tabcolsep}{3pt}
\begin{tabular}{l c c c}
\toprule
Query & Calls per step & Tokens & Latency \\
\midrule
Sector score, one view      & 12 & \cSectorTok{} & \cSectorLat{}\,ms \\
Stop vote                   & 1  & \cStopTok{}   & \cStopLat{}\,ms \\
Route proposal, text only   & 1  & \cRouteTok{}  & \cRouteLat{}\,ms \\
Frontier caption, cached    & $\le 2$ & \cCapTok{} & \cCapLat{}\,ms \\
\midrule
Ours, one step, captions cached   & 14 & \cOursStepTok{} & \cOursStepS{}\,s \\
Ours, one step, two new captions  & 16 & \cOursStepWorstTok{} & \cOursStepWorstS{}\,s \\
Map-history + CoT, one step & 1  & \cBaseStepTok{} & \cBaseStepS{}\,s \\
\bottomrule
\end{tabular}
\end{table}

\subsection{Real-Robot Deployment}
\label{sec:robot}

\begin{table}[tp]
\caption{Real-robot results on a Unitree Go2, twenty-five episodes per granularity.}
\label{tab:robot}
\centering
\scriptsize
\setlength{\tabcolsep}{5pt}
\begin{tabular}{l cc cc}
\toprule
& \multicolumn{2}{c}{Fine-grained} & \multicolumn{2}{c}{Coarse-grained} \\
\cmidrule(lr){2-3} \cmidrule(lr){4-5}
Agent & SR$\uparrow$ & NE$\downarrow$ & SR$\uparrow$ & NE$\downarrow$ \\
\midrule
Text-history + CoT & 20 & 4.01 & 32 & 3.82 \\
Map-history + CoT  & 24 & 3.83 & 32 & 3.80 \\
Ours               & \textbf{36} & \textbf{3.77} & \textbf{44} & \textbf{3.06} \\
\bottomrule
\end{tabular}
\end{table}

The agent was deployed on a Unitree Go2 quadruped. A RealSense D455 on a servo turns through twelve headings thirty degrees apart to capture the RGB-D panorama in the form the simulator provides it; a laptop that communicates with the robot runs the map, the field and the terminator and streams the views to a server with two RTX 3090 GPUs hosting the same Qwen3-VL-4B. Table~\ref{tab:robot} gives the result over twenty-five episodes per granularity at the deployment site, with instructions written for it, against the two chain-of-thought baselines run on the same robot and server. With twenty-five episodes one success is four points and the gaps to the baselines are three successes each, so the table shows feasibility and an ordering, not a significant difference: the agent leads both baselines on both granularities, and coarse-grained instructions lead fine-grained ones as in simulation.

\subsection{Variance and Reproducibility}
\label{sec:variance}

Two runs of an identical configuration of our agent differ by up to \runNoise{} success points on this subset, and two runs of a baseline by up to eight, so differences below that size are not rankings on their own and a single run of a modified configuration is never reported as an improvement. The three runs behind each main number spread \oFineThirtyRange{} points on fine-grained and 3.2 on coarse-grained instructions, and every run of ours is at or above every run of every baseline. Because the benchmark's validation split covers few scenes, every configuration in Sections~\ref{sec:ablation} and~\ref{sec:negative} was selected on the evaluation subset; we disclose every arm that was run, and the main configuration was frozen before its three runs.

\section{Conclusion}

A small vision-language model can navigate photorealistic scenes zero-shot if it is asked only what it can answer: Qwen3-VL-4B provides useful single-view semantic evidence, while the tested route, distance, grounding, and stopping queries are substantially less reliable. So a route-blind semantic field, a skeleton graph and a two-stage terminator turn its sector scores and stop votes into decisions. Over three runs the agent exceeds every zero-shot baseline we re-ran, by \oCoarseGain{} points against the strongest coarse-grained baseline and \oFineThirtyGain{} against the best fine-grained one at the same budget. The terminator is worth \aStackGain{} points and its trail gate alone \aGateGain{}, the field and the route channel add nearly independently, budget pays only with commitment, and none of the thirteen attempts helps reliably, each failing where the model's route reasoning, distance estimates or grounding fail. Four bounds apply: the baselines are our own reproductions with one model and one prompt family, those without chain of thought rarely stopping here, unlike their published description; the paired coarse-grained interval includes zero, and coarse-grained path efficiency and nDTW go to Map-history with chain of thought; configurations outside the frozen one were chosen on the evaluation subset and are exploratory; and the budget return is proved affordable only under the benchmark's Tel-Hop displacement, not for a robot that must drive back, so independent scenes and constrained physical motion are what would settle generalisation and return feasibility. A stronger model will move the boundary between what it can be asked and what structure must supply; an agent built on a small one should keep that structure between the model and the action.

\bibliographystyle{IEEEtran}
{\let\footnotesize\scriptsize\bibliography{refs}}

\end{document}

%% file: numbers.tex
\newcommand{\bFineSR}{42.19}
\newcommand{\bFineTL}{40.92}
\newcommand{\bMapCoarseSR}{49.48}
\newcommand{\oFineSPL}{13.63}
\newcommand{\oFineOSR}{58.96}
\newcommand{\oFineRuns}{43.01 / 42.47 / 40.86}
\newcommand{\oCoarseSR}{51.61}
\newcommand{\oCoarseSPL}{15.07}
\newcommand{\oCoarseNE}{3.80}
\newcommand{\oCoarseOSR}{70.97}
\newcommand{\oCoarseTL}{31.18}
\newcommand{\oCoarseNDTW}{45.13}
\newcommand{\oCoarseRuns}{50.00 / 53.23 / 51.61}
\newcommand{\oCoarseGain}{6.2}
\newcommand{\oCoarseBudgetMatchedSR}{44.80}
\newcommand{\aBzeroSR}{42.11}
\newcommand{\aBoneSR}{20.97}
\newcommand{\aBoneOSR}{77.42}
\newcommand{\aBoneSPL}{2.45}
\newcommand{\aBtwoSR}{32.80}
\newcommand{\aBthreeSR}{35.48}
\newcommand{\aBfourSR}{41.94}
\newcommand{\aStackGain}{21.1}
\newcommand{\aGateGain}{9.3}
\newcommand{\aReturnGain}{6.6}
\newcommand{\aCoarseBtwoSR}{47.58}
\newcommand{\aCoarseBtwoSPL}{15.88}
\newcommand{\aCoarseBtwoSteps}{16.6}
\newcommand{\aCoarseBthreeSR}{46.78}
\newcommand{\aCoarseBthreeSPL}{15.12}
\newcommand{\aCoarseBoneSR}{28.49}
\newcommand{\aCoarseBoneSPL}{4.37}
\newcommand{\aCoarseBfourSR}{51.61}
\newcommand{\aCoarseBfourSPL}{9.38}
\newcommand{\aCnoCoTSR}{36.56}
\newcommand{\aCnoFieldSR}{31.18}
\newcommand{\aCnoneSR}{25.81}
\newcommand{\aCnoCoTDelta}{5.6}
\newcommand{\aCnoFieldDelta}{10.9}
\newcommand{\aCoarseNoFieldSR}{45.16}
\newcommand{\aCoarseNoFieldSPL}{10.43}
\newcommand{\aCoarseFieldOnlySR}{51.61}
\newcommand{\aCoarseFieldOnlySPL}{14.39}
\newcommand{\aCoarseFieldOnlyOSR}{72.04}
\newcommand{\aCoarseNeitherSR}{44.62}
\newcommand{\aCoarseNeitherSPL}{9.22}
\newcommand{\aCoarseNoFieldOSR}{70.43}
\newcommand{\aCoarseNeitherOSR}{69.89}
\newcommand{\rMapCoTFineTL}{26.83}
\newcommand{\rMapCoTFineNE}{5.93}
\newcommand{\rMapCoTFineSR}{27.13}
\newcommand{\rMapCoTFineOSR}{53.01}
\newcommand{\rMapCoTFineSPL}{10.62}
\newcommand{\rMapCoTFineNDTW}{41.89}
\newcommand{\rMapCoTCoarseTL}{19.66}
\newcommand{\rMapCoTCoarseNE}{4.63}
\newcommand{\rMapCoTCoarseSR}{40.07}
\newcommand{\rMapCoTCoarseOSR}{56.03}
\newcommand{\rMapCoTCoarseSPL}{16.18}
\newcommand{\rMapCoTCoarseNDTW}{46.74}
\newcommand{\rMapCoarseTL}{47.99}
\newcommand{\rMapCoarseNE}{4.98}
\newcommand{\rMapCoarseSR}{35.64}
\newcommand{\rMapCoarseOSR}{79.08}
\newcommand{\rMapCoarseSPL}{4.48}
\newcommand{\rMapCoarseNDTW}{43.78}
\newcommand{\rMapFineTL}{48.27}
\newcommand{\rMapFineNE}{8.86}
\newcommand{\rMapFineSR}{10.90}
\newcommand{\rMapFineOSR}{58.24}
\newcommand{\rMapFineSPL}{2.06}
\newcommand{\rMapFineNDTW}{31.35}
\newcommand{\rTextCoTFineTL}{30.91}
\newcommand{\rTextCoTFineNE}{6.52}
\newcommand{\rTextCoTFineSR}{26.33}
\newcommand{\rTextCoTFineOSR}{53.99}
\newcommand{\rTextCoTFineSPL}{10.11}
\newcommand{\rTextCoTFineNDTW}{39.26}
\newcommand{\rTextCoTCoarseTL}{27.22}
\newcommand{\rTextCoTCoarseNE}{4.71}
\newcommand{\rTextCoTCoarseSR}{45.39}
\newcommand{\rTextCoTCoarseOSR}{67.02}
\newcommand{\rTextCoTCoarseSPL}{14.90}
\newcommand{\rTextCoTCoarseNDTW}{46.72}
\newcommand{\sBudgetThirty}{37.63}
\newcommand{\sBudgetForty}{37.63}
\newcommand{\sCoarseBudgetForty}{55.38}
\newcommand{\sCommitPointTwo}{37.63}
\newcommand{\sDeltaLow}{43.01}
\newcommand{\sDeltaHigh}{37.63}
\newcommand{\sMinStepsFive}{37.10}
\newcommand{\sMinStepsTen}{43.01}
\newcommand{\sMinStepsTenSPL}{8.46}
\newcommand{\nCoTAnswerFirst}{37.81}
\newcommand{\nCoTFull}{31.91}
\newcommand{\nApproach}{39.25}
\newcommand{\nGroundStop}{38.71}
\newcommand{\nFallback}{37.63}
\newcommand{\nSubgoal}{29.84}
\newcommand{\nPickRank}{0.44}
\newcommand{\nNudge}{42.47}
\newcommand{\nGField}{42.47}
\newcommand{\nMask}{43.55}
\newcommand{\nLineup}{43.19}
\newcommand{\nLineupRuns}{45.70 / 40.32 / 43.55}
\newcommand{\nTriang}{37.63}
\newcommand{\nMinwalk}{42.47}
\newcommand{\nRoomTwo}{40.86}
\newcommand{\nRoomTwoOSR}{66.13}
\newcommand{\nFineEps}{188}
\newcommand{\nCoarseEps}{192}
\newcommand{\runNoise}{7}
\newcommand{\oFineThirtyTL}{28.13}
\newcommand{\oFineThirtyNE}{4.88}
\newcommand{\oFineThirtySR}{36.20}
\newcommand{\oFineThirtyOSR}{55.73}
\newcommand{\oFineThirtySPL}{14.04}
\newcommand{\oFineThirtyRuns}{39.25 / 34.41 / 34.95}
\newcommand{\oFineThirtyRange}{4.84}
\newcommand{\oFineThirtyGain}{9.1}
\newcommand{\pCoarseTextCoTCI}{[-0.2, +12.9]}
\newcommand{\pCoarseTextCoT}{+6.5}
\newcommand{\pFineMapCoTCI}{[+3.1, +15.2]}
\newcommand{\oFineThirtyNDTW}{42.15}
\newcommand{\oCoarseGainMapCoT}{11.5}
\newcommand{\cGPU}{RTX 5090}
\newcommand{\cWeightsGB}{8.3}
\newcommand{\cPeakGB}{8.5}
\newcommand{\cSectorLat}{62}
\newcommand{\cSectorTok}{3}
\newcommand{\cStopLat}{72}
\newcommand{\cStopTok}{3}
\newcommand{\cRouteLat}{691}
\newcommand{\cCapTok}{7}
\newcommand{\cCapLat}{124}
\newcommand{\cRouteTok}{43}
\newcommand{\cOursStepS}{1.51}
\newcommand{\cOursStepWorstS}{1.76}
\newcommand{\cOursStepWorstTok}{96}
\newcommand{\cOursStepTok}{82}
\newcommand{\cBaseStepS}{3.10}
\newcommand{\cBaseStepTok}{192}
\newcommand{\cGPUsmall}{RTX 4080 SUPER}
\newcommand{\cOursStepSsmall}{1.65}
\newcommand{\pFineMapCoT}{+9.1}
\newcommand{\pCoarseMapCoT}{+11.1}
\newcommand{\pCoarseMapCoTCI}{[+3.9, +18.1]}
\newcommand{\rTextFineTL}{45.57}
\newcommand{\rTextFineNE}{9.30}
\newcommand{\rTextFineSR}{15.96}
\newcommand{\rTextFineOSR}{56.91}
\newcommand{\rTextFineSPL}{3.95}
\newcommand{\rTextFineNDTW}{30.34}
\newcommand{\rTextCoarseTL}{46.64}
\newcommand{\rTextCoarseNE}{6.49}
\newcommand{\rTextCoarseSR}{31.03}
\newcommand{\rTextCoarseOSR}{71.45}
\newcommand{\rTextCoarseSPL}{5.24}
\newcommand{\rTextCoarseNDTW}{39.81}